\documentclass[11pt]{article}

\usepackage[T1]{fontenc}
\usepackage[utf8]{inputenc}
\usepackage{lmodern}
\usepackage[a4paper,margin=2.5cm]{geometry}
\usepackage{amsmath,amssymb,amsthm}
\usepackage{mathtools}
\usepackage{graphicx}
\usepackage{booktabs}
\usepackage{enumitem}
\usepackage{microtype}
\usepackage[hidelinks]{hyperref}

\newtheorem{theorem}{Theorem}
\newtheorem{proposition}{Proposition}
\theoremstyle{remark}

\DeclareMathOperator{\im}{im}
\DeclareMathOperator{\rank}{rank}
\DeclareMathOperator{\tr}{tr}
\DeclareMathOperator{\nnz}{nnz}
\DeclareMathOperator{\Hol}{Hol}
\DeclareMathOperator{\wrap}{wrap}
\DeclareMathOperator{\Calibrate}{Calibrate}
\newcommand{\T}{\top}
\newcommand{\R}{\mathbb{R}}
\newcommand{\Z}{\mathbb{Z}}
\newcommand{\F}{\mathbb{F}}
\newcommand{\cA}{\mathcal{A}}
\newcommand{\cL}{\mathcal{L}}
\newcommand{\norm}[1]{\lVert #1\rVert}

\title{HodgePhaseNet: Discrete Hodge Decomposition of Semantic Flows\\ in Transformer Attention}
\author{Kiyotaka Kasubuchi \and Kazuo Fukiya}
\date{September 2026}

\begin{document}
\maketitle

\begin{abstract}
WavePhaseNet introduced frequency-resolved semantic representations, QuantumPhaseNet introduced gauge-covariant phase connections, and ModularPhaseNet made a finite cyclic phase channel and its cycle holonomy computable in an ordinary Transformer. These constructions describe complementary aspects of semantic organization, yet they do not by themselves separate an observed collection of pairwise relations into a globally consistent direction, circulation around local compositions, and circulation that persists around a larger unfilled cycle. HodgePhaseNet addresses this separation by representing semantic transitions as an antisymmetric real-valued 1-cochain on a finite oriented simplicial complex. With vertex--edge incidence matrix $B_1$ and edge--face incidence matrix $B_2$, every such field has a unique orthogonal decomposition $\Omega = B_1^{\T}\Phi + B_2\Psi + H$, where $B_1H=0$ and $B_2^{\T}H=0$. The three component \emph{fields} are unique even when their potentials are not. We give explicit projectors, a sparse implementation, integrability and gauge results, a verified numerical example, and a redesigned attention interface. The attention bias is defined on allowed ordered query--key pairs, remains consistent under arbitrary choices of edge orientation, and preserves a decoder's causal information boundary by constructing its complex from the visible prefix. Gauge invariance of the attention output is conditional on using invariant or appropriately compensated features. A simple difference of vertex embeddings has no coexact or harmonic component; a genuine relation encoder is therefore indispensable. Local circulation is not automatically contradiction, and harmonic flow is not automatically missing knowledge. These interpretations and any reduction in factual error require controlled empirical evaluation. This paper presents a mathematical and architectural proposal, not external natural-language performance results.
\end{abstract}

\noindent\textbf{Keywords:} Transformer; discrete Hodge decomposition; semantic flow; simplicial complex; gauge connection; cyclic holonomy; attention; calibrated risk.

\section{Introduction}

\subsection{From the three PhaseNet constructions to an edge field}

The original Transformer computes a similarity between a query and a key and uses a softmax to mix value vectors~\cite{vaswani2017}. It can learn complex relations implicitly, but the dot product does not explicitly distinguish a direction inferred from a common potential, a local three-way circulation, and a cycle that cannot be reduced to any chosen local face. This distinction matters when the intended output is a hierarchy, a chain of reasoning, or a claim supported by several paths of evidence.

WavePhaseNet~\cite{wavephasenet} starts with the \emph{scale} of a representation. Its discrete Fourier transform along a sequence separates frequency bands; low and high bands can be used as candidate features of longer-range intent and local expression. The assignment of semantic roles to these bands remains an empirical interpretation. Its treatment of overlapping windows also invokes graph-based consistency and harmonic projection. The present work does not claim to introduce harmonic ideas into the PhaseNet family for the first time. It moves the object of decomposition to an independently constructed, directed \textbf{edge relation field} and separates that field into three mutually orthogonal parts.

QuantumPhaseNet~\cite{quantumphasenet} adds the \emph{comparison rule} between local complex phase representations. When phases are changed by local gauge transformations, their bare differences depend on the chosen coordinates; a connection permits covariant comparison and describes phase consistency. Its reported prototype results concern a built-in synthetic setting for a classical quantum-inspired model. A covariant connection supplies a natural source of edge information, but its mere availability does not say which part of an entire edge field is a potential difference, which is explained by the boundaries of selected faces, and which survives as a global cycle.

ModularPhaseNet~\cite{modularphasenet} supplies a \emph{finite implementation and an exact group-valued test}. Its auxiliary phase lies in a cyclic subgroup of a finite field's multiplicative group, while the Transformer's principal hidden states remain real. Products of relative group elements around cycles yield gauge-invariant holonomy. A zero-holonomy criterion answers whether a finite group connection is globally integrable, given all cycles; it does not supply a real orthogonal allocation of edge energy among potential, local, and global contributions. Nor may a modular residue be identified with a real vector without choosing a lift and accepting its branch dependence.

The missing step is therefore well defined. Frequency features specify scale; covariant phase features specify how to compare neighboring states; finite group features give an exact discrete consistency signal. We use these as inputs to a relation encoder and then ask how its \textbf{real antisymmetric edge field} decomposes over a declared set of vertices, edges, and composable faces. The identity that a boundary has no boundary makes this question an elementary finite-dimensional Hodge problem. Its answer can be returned to attention as structured features. This is the precise bridge from the three preceding papers to HodgePhaseNet. Figure~\ref{fig:roles} summarizes these complementary roles and the new edge-field step.

\begin{figure}[t]
  \centering
  \includegraphics[width=0.92\linewidth]{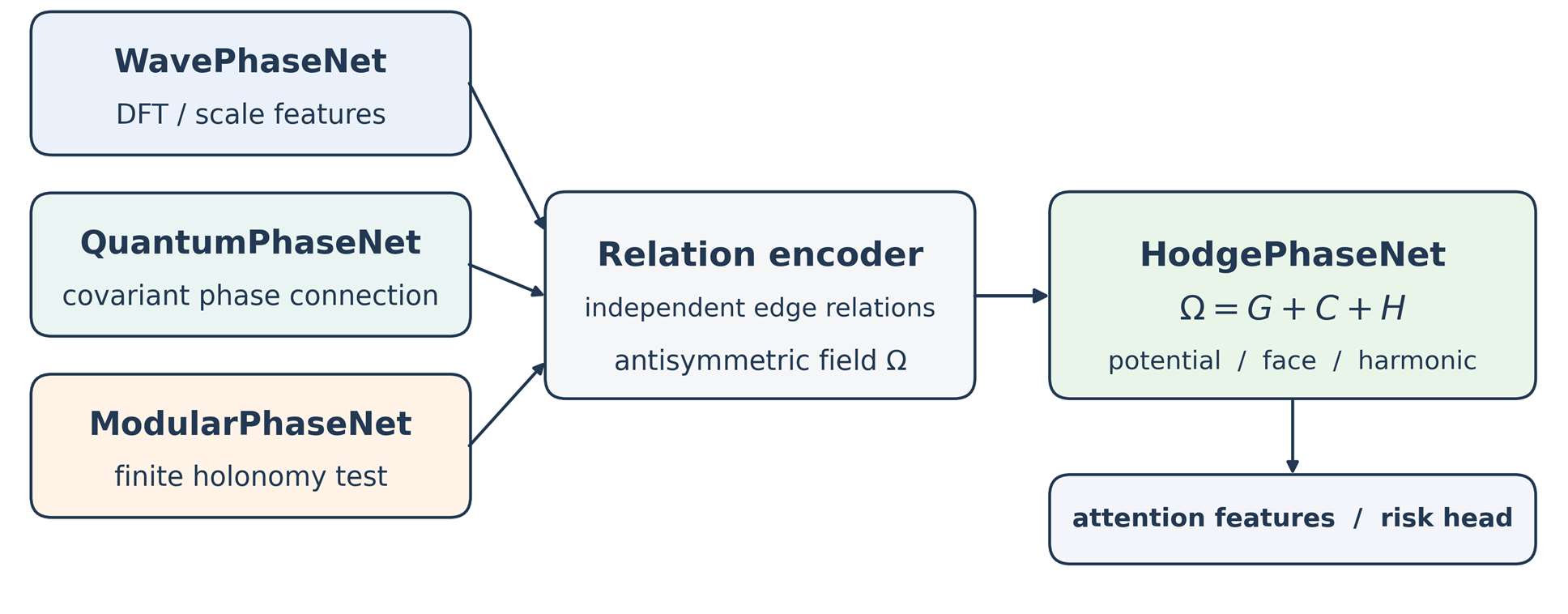}
  \caption{Complementary roles of the earlier PhaseNet methods. Their features feed a separately learned real edge field; the Hodge layer separates that field into gradient, local coexact, and global harmonic components before returning structured features to attention.}
  \label{fig:roles}
\end{figure}

\subsection{Scope and contributions}

The paper establishes: (i) an edge-level mathematical model with a nondegenerate relation encoder; (ii) an exact three-term decomposition, projectors, weighted extension, and a worked example; (iii) a gauge- and orientation-aware integration into self-attention, including causal decoding; and (iv) a falsifiable evaluation protocol for semantic interpretation, factuality, calibration, and cost. The theorems concern a fixed finite complex and specified inner product. They do not assert that a learned complex is a uniquely correct model of language, or that algebraic circulation is an error label.

\section{Background and related work}

\subsection{Frequency, phase, and finite cyclic holonomy}

For token representations $X=(x_0,\dots,x_{N-1})$, WavePhaseNet uses the sequence DFT
\begin{equation}
  \hat{x}_k = \sum_{n=0}^{N-1} x_n \exp\!\left(-\frac{2\pi i k n}{N}\right).
\end{equation}
The DFT and its inverse are exact finite-dimensional operations, with a Parseval energy identity after the usual normalization. A low-frequency band is a possible descriptor of longer-range semantic organization, while local residuals may encode fine detail~\cite{wavephasenet}. These quantities become relation-encoder features, not an asserted proof that a particular frequency equals a particular concept.

QuantumPhaseNet represents a context-dependent state schematically as $\psi = A e^{i\phi} u$ and introduces a connection for parallel comparison under a local phase change~\cite{quantumphasenet}. For an oriented edge $e\colon i\to j$, we adopt the real lift convention
\begin{equation}\label{eq:gauge}
  a_e \mapsto a'_e = a_e + (\chi_j - \chi_i).
\end{equation}
The associated covariant quantities must transform consistently; for example, if $\theta_i\mapsto\theta_i+\chi_i$, then $a_e-(\theta_j-\theta_i)$ is invariant in a fixed real phase chart. A real lift is only local when phase wrapping occurs.

ModularPhaseNet represents the auxiliary phase by an exponent $a\in\Z/q\Z$, optionally as $g^a$ in a subgroup of order $q\mid(p-1)$ in $\F_p^{\times}$~\cite{modularphasenet}. For an oriented cycle $\gamma$, its additive holonomy is
\begin{equation}\label{eq:hol}
  \Hol_q(\gamma) = \sum_{e\in\gamma} s_{\gamma e}\, a_e \pmod q,
\end{equation}
where $s_{\gamma e}$ records relative orientation. The product representation is equivalent. On a connected graph, vanishing holonomy on a cycle basis is necessary and sufficient for an edge assignment to be the difference of vertex exponents, up to a constant. This algebraic fact and a real Hodge energy decomposition answer different questions.

\subsection{Discrete Hodge theory}

Combinatorial Hodge theory~\cite{eckmann1944} splits finite cochains into exact, coexact, and harmonic subspaces. Edge-flow applications to ranking~\cite{jiang2011}, higher-order random walks~\cite{schaub2020}, and simplicial signal processing~\cite{barbarossa2020} show the mathematical utility of this distinction. Simplicial neural networks, Hodgelets, and related edge models~\cite{ebli2020,roddenberry2022,yang2023,bodnar2021} motivate learnable higher-order interactions. HodgePhaseNet contributes a task-specific route from the PhaseNet feature families to a semantic edge field, and from its decomposition back to the exact ordered query--key scores of a Transformer. Its empirical advantage over these alternatives remains an open question.

\section{Semantic complex and discrete operators}

\subsection{Oriented complex}

Let $K=(V,E,F)$ be a finite two-dimensional simplicial complex with $n_0$ vertices, $n_1$ edges, and $n_2$ triangular faces. A vertex may stand for a token, sentence, claim, entity, or concept; the chosen granularity is fixed for a given layer. An edge is a candidate composable relation; a face means that the three boundary relations are judged locally composable by an explicit task rule. A triangle of edges need not be filled automatically. Removing all faces removes the coexact subspace; filling every possible triangle may remove a global cycle that the application intended to retain.

Choose an arbitrary orientation for every edge and face. The incidence matrices are $B_1\in\R^{n_0\times n_1}$ and $B_2\in\R^{n_1\times n_2}$. Column $e\colon i\to j$ of $B_1$ has $-1$ at $i$ and $+1$ at $j$; $B_2$ records the oriented boundary of each face. The chain condition is
\begin{equation}\label{eq:chain}
  B_1B_2 = 0.
\end{equation}
The orientations are coordinate choices, not predictions about the truth or temporal direction of a relation.

\subsection{Cochains and Laplacians}

With $r$ real channels, $C^k(K;\R^r)=\R^{n_k\times r}$ for $k=0,1,2$. A vertex potential $\Phi\in C^0$, an edge field $\Omega\in C^1$, and a face potential $\Psi\in C^2$ have the unweighted Frobenius inner product. The coboundaries and adjoints are
\begin{equation}
  d_0 = B_1^{\T},\quad d_1 = B_2^{\T},\quad \delta_1 = B_1,\quad \delta_2 = B_2,\quad d_1d_0 = 0.
\end{equation}
Thus $B_2^{\T}B_1^{\T}\Phi=0$: potential differences have zero sum around every \emph{filled} face. The 1-Hodge Laplacian and its energy are
\begin{equation}\label{eq:L1}
  L_1 = B_1^{\T}B_1 + B_2B_2^{\T},\qquad
  \langle\Omega, L_1\Omega\rangle = \norm{B_1\Omega}_F^2 + \norm{B_2^{\T}\Omega}_F^2.
\end{equation}
Consequently $\ker L_1=\ker B_1\cap\ker B_2^{\T}$. The two terms in~\eqref{eq:L1} measure vertex divergence and circulation on the declared faces. They are algebraic quantities, not labels of factual accuracy.

\subsection{Constructing a nontrivial edge field}\label{sec:encoder}

At layer $\ell$, let $x_i^{(\ell)}$ be a vertex state. A relation encoder evaluates both orders of an edge with correspondingly reversed relation metadata:
\begin{equation}\label{eq:encoder}
  R_{ij} = R_\vartheta\bigl(x_i,x_j,r_{ij},c_{ij},f^W_{ij},f^Q_{ij},f^M_{ij},f^E_{ij}\bigr),\qquad
  \Omega_{i\to j} = \frac{R_{ij}-R_{ji}}{2}.
\end{equation}
This guarantees $\Omega_{j\to i}=-\Omega_{i\to j}$. Here $f^W$ may contain band energies and reconstructions; $f^Q$ may contain covariant phase residuals and a real connection; $f^M$ may contain cyclic group relations and nearby holonomy; and $f^E$ may contain retrieved evidence and relation type. These are optional, measurable inputs. When a gauge result is required, parameterize a connection channel as $\Omega=a+\tilde{R}$ with $\tilde{R}$ gauge invariant and orientation equivariant, so~\eqref{eq:gauge} implies $\Omega'=\Omega+B_1^{\T}\chi$. An arbitrary unconstrained encoder does not automatically have this transformation law.

\begin{proposition}[degeneracy of simple differences]\label{prop:degenerate}
If $X\in C^0(K;\R^r)$ and $\Omega=B_1^{\T}X$, its coexact and harmonic projections are zero.
\end{proposition}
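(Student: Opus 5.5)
The plan is to show that $\Omega=B_1^{\T}X$ already lies in the exact (gradient) subspace $\im B_1^{\T}$, and that this subspace is orthogonal to both the coexact subspace $\im B_2$ and the harmonic subspace $\ker L_1=\ker B_1\cap\ker B_2^{\T}$. Since the coexact and harmonic projections are the orthogonal projectors onto those subspaces in the unweighted Frobenius inner product, a vector orthogonal to both subspaces is sent to zero by each projector. No explicit projector formula is needed: orthogonality of the three subspaces suffices. The channel dimension $r$ plays no role, because all operators act column by column on $\R^{n_1\times r}$.

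\emph{Step 1: exact is orthogonal to coexact.} For any $\Psi\in C^2(K;\R^r)$, I will compute
\[
\langle B_1^{\T}X, B_2\Psi\rangle = \tr\bigl(X^{\T}B_1B_2\Psi\bigr) = 0
\]
using the chain condition~\eqref{eq:chain}, $B_1B_2=0$. Hence $\Omega\perp\im B_2$, and the orthogonal projection of $\Omega$ onto $\im B_2$ vanishes.

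\emph{Step 2: exact is orthogonal to harmonic.} For any $H$ with $B_1H=0$, I will compute
\[
\langle B_1^{\T}X, H\rangle = \tr\bigl(X^{\T}B_1H\bigr) = 0,
\]
so $\Omega\perp\ker B_1\supseteq\ker L_1$. The identification $\ker L_1=\ker B_1\cap\ker B_2^{\T}$ was already obtained from~\eqref{eq:L1}. Therefore the harmonic projection of $\Omega$ also vanishes.

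The only point needing care is definitional. The harmonic component must be understood as the orthogonal projection onto $\ker L_1$ with respect to the same inner product used for the decomposition; it must not be read as depending on a particular choice of potentials. Once that is fixed, the uniqueness of the orthogonal decomposition $\Omega=B_1^{\T}\Phi+B_2\Psi+H$ gives the conclusion directly: $\Omega=B_1^{\T}X+0+0$ is a valid decomposition, so by uniqueness its coexact and harmonic components are zero.

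If the weighted inner product introduced later were used instead, the same argument would go through. In that case one replaces the transposes by the corresponding weighted adjoints and uses $d_1d_0=0$ in place of $B_1B_2=0$.
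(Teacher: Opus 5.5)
Your proof is correct and follows the paper's own argument: $\Omega=B_1^{\T}X$ lies in $\im B_1^{\T}$, which is orthogonal to $\im B_2$ because $B_1B_2=0$ and to $\ker L_1\subseteq\ker B_1$, so both projections vanish. Your trace computations just spell out the paper's one-line proof. Your remarks on uniqueness and on the weighted inner product are also correct, but the proof does not need them.
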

\begin{proof}
$\Omega\in\im B_1^{\T}$, which is orthogonal to $\im B_2$ by~\eqref{eq:chain} and to $\ker L_1$ by definition. Thus both other projections vanish.
\end{proof}

This is a substantive architectural constraint. A complex built from vertex differences alone cannot display the intended local or global circulation, irrespective of how its downstream Hodge layer is tuned.

\section{Orthogonal decomposition and computable projectors}

\subsection{Main theorem}

\begin{theorem}[discrete semantic Hodge decomposition]\label{thm:hodge}
For fixed $K$, every $\Omega\in C^1(K;\R^r)$ has a unique orthogonal decomposition into component fields
\begin{equation}\label{eq:hodge}
  \Omega = G + C + H = B_1^{\T}\Phi + B_2\Psi + H,\qquad B_1H=0,\quad B_2^{\T}H=0.
\end{equation}
The potentials $\Phi$ and $\Psi$ need not be unique; $G$, $C$, $H$ are unique.
\end{theorem}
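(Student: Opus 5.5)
The plan is to prove an orthogonal direct-sum decomposition of the edge space $\R^{n_1}$ and then apply it column by column to the $r$ channels. Because the Frobenius inner product on $C^1(K;\R^r)=\R^{n_1\times r}$ is the sum of the Euclidean inner products of the columns, it is enough to treat $r=1$. The target identity is
\begin{equation*}
  \R^{n_1} = \im B_1^{\T} \oplus \im B_2 \oplus \ker L_1,
\end{equation*}
with the three summands pairwise orthogonal.

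The first step is orthogonality of the two image spaces. For any $\Phi,\Psi$, the chain condition~\eqref{eq:chain} gives $\langle B_1^{\T}\Phi, B_2\Psi\rangle = \langle \Phi, B_1B_2\Psi\rangle = 0$. Next I would identify the orthogonal complement of their sum. Since $(\im A)^{\perp}=\ker A^{\T}$, we get
\begin{equation*}
  (\im B_1^{\T}+\im B_2)^{\perp} = \ker B_1\cap\ker B_2^{\T} = \ker L_1,
\end{equation*}
where the last equality is the consequence of~\eqref{eq:L1} already recorded in the paper. In finite dimensions every subspace $W$ satisfies $\R^{n_1}=W\oplus W^{\perp}$. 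Taking $W=\im B_1^{\T}\oplus\im B_2$, this yields the three-way orthogonal splitting.

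For existence, let $G,C,H$ be the orthogonal projections of $\Omega$ onto the three summands. Then $G\in\im B_1^{\T}$, so $G=B_1^{\T}\Phi$ for some $\Phi$. Likewise $C=B_2\Psi$ for some $\Psi$. The component $H$ lies in $\ker L_1$, so $B_1H=0$ and $B_2^{\T}H=0$, which is exactly~\eqref{eq:hodge}.

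For uniqueness of the fields, suppose $\Omega=G+C+H=G'+C'+H'$ are two such decompositions. Then $(G-G')+(C-C')+(H-H')=0$, and the three differences lie in mutually orthogonal subspaces. Taking the inner product of this sum with each difference in turn shows that each difference is zero.

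The potentials are unique only up to kernels. $\Phi$ is determined modulo $\ker B_1^{\T}$, which contains the constants on each connected component. $\Psi$ is determined modulo $\ker B_2$, which is nontrivial, for example, when faces bound a closed surface. A single explicit example suffices to show that the potentials "need not be unique": a constant shift of $\Phi$ on a connected $K$ leaves $G$ unchanged.

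There is no real obstacle in this argument. The one step needing care is the orthogonal-complement identity: I must use $(\im A)^{\perp}=\ker A^{\T}$ with the correct transposes, and correctly match $\ker B_1\cap\ker B_2^{\T}$ to $\ker L_1$ via the energy identity. To make the projectors explicit, I would also note that
\begin{equation*}
  P_G = B_1^{\T}(B_1B_1^{\T})^{+}B_1,\qquad P_C = B_2(B_2^{\T}B_2)^{+}B_2^{\T},\qquad P_H = I-P_G-P_C,
\end{equation*}
where ${}^{+}$ denotes the Moore--Penrose pseudoinverse.
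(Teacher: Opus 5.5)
Your proposal is correct and follows the paper's proof: you get orthogonality of $\im B_1^{\T}$ and $\im B_2$ from $B_1B_2=0$, identify the orthogonal complement of their sum with $\ker B_1\cap\ker B_2^{\T}=\ker L_1$, and conclude by finite-dimensional orthogonal projection. You also spell out steps the paper leaves implicit, namely the reduction to $r=1$, the orthogonality argument for uniqueness of $G,C,H$, and the description of the potential ambiguity by $\ker B_1^{\T}$ and $\ker B_2$.
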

\begin{proof}
From~\eqref{eq:chain}, $\langle B_1^{\T}\Phi, B_2\Psi\rangle_F=\tr(\Phi^{\T}B_1B_2\Psi)=0$. The orthogonal complement of $\im B_1^{\T}\oplus\im B_2$ is $\ker B_1\cap\ker B_2^{\T}=\ker L_1$. Finite-dimensional orthogonal projection gives existence and uniqueness of the fields.
\end{proof}

Writing $A^{\dagger}$ for the Moore--Penrose pseudoinverse, the projectors are
\begin{equation}\label{eq:proj}
  P_G = B_1^{\T}(B_1B_1^{\T})^{\dagger}B_1,\qquad
  P_C = B_2(B_2^{\T}B_2)^{\dagger}B_2^{\T},\qquad
  P_H = I - P_G - P_C.
\end{equation}
We compute $G=P_G\Omega$, $C=P_C\Omega$, and $H=P_H\Omega$. For large sparse complexes, do not form the dense projectors. Instead solve, column by column,
\begin{equation}\label{eq:solve}
  (B_1B_1^{\T})\Phi = B_1\Omega,\qquad (B_2^{\T}B_2)\Psi = B_2^{\T}\Omega,\qquad H = \Omega - B_1^{\T}\Phi - B_2\Psi.
\end{equation}
Set the mean of $\Phi$ to zero separately on each connected component, or pin one vertex in each component. A minimum-norm solve handles a nontrivial kernel of the face system. Preconditioned conjugate gradients on the normal systems, LSQR, or an appropriate sparse factorization are possible; tolerances must be recorded. Approximate solves have residual errors and should not be described as exact orthogonal projections.

\subsection{Energy, topology, and integrability}

The components obey
\begin{equation}\label{eq:energy}
  \norm{\Omega}_F^2 = \norm{G}_F^2 + \norm{C}_F^2 + \norm{H}_F^2,\qquad
  \rho_s = \frac{\norm{s}_F^2}{\norm{\Omega}_F^2},\qquad
  \sum_{s\in\{G,C,H\}}\rho_s = 1\quad(\Omega\neq0).
\end{equation}
For real coefficients, $\dim\ker L_1=\beta_1(K)$, the first Betti number. A harmonic \textbf{subspace} exists when topology permits it, but a given field can have zero projection onto it. Neither $\beta_1>0$ nor $\norm{H}>0$ alone proves a missing fact.

\begin{theorem}[global potential criterion]\label{thm:potential}
On a connected $K$, $\Omega=B_1^{\T}\Phi$ for a vertex potential unique up to an additive constant if and only if $C=H=0$.
\end{theorem}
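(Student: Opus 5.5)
The plan splits into the two directions of the equivalence and then handles uniqueness of the potential separately. Throughout, Theorem~\ref{thm:hodge} supplies the unique orthogonal splitting $\Omega=G+C+H$, with $G\in\im B_1^{\T}$, $C\in\im B_2$ and $H\in\ker L_1$.

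For the forward direction, suppose $\Omega=B_1^{\T}\Phi$. Then $\Omega$ lies in $\im B_1^{\T}$, and Proposition~\ref{prop:degenerate} applies directly: the coexact and harmonic projections vanish, so $C=H=0$. For the converse, suppose $C=H=0$. Then $\Omega=G$, and by Theorem~\ref{thm:hodge} we have $G=B_1^{\T}\Phi$ for some $\Phi$, for instance the solution of the first system in~\eqref{eq:solve}. This gives existence of the potential. Neither direction uses connectedness.

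Connectedness is needed only for uniqueness up to an additive constant. If $B_1^{\T}\Phi=B_1^{\T}\Phi'$, then $D=\Phi-\Phi'$ satisfies $B_1^{\T}D=0$. The column of $B_1$ for an edge $e\colon i\to j$ has $-1$ at $i$ and $+1$ at $j$, so $(B_1^{\T}D)_e=D_j-D_i$. Hence $D$ takes equal values at the two endpoints of every edge, channel by channel. Since $K$ is connected, its $1$-skeleton $(V,E)$ is connected, because faces add no connectivity beyond their boundary edges. Propagating equality along paths then shows that $D$ is constant on $V$, that is, $D=\mathbf{1}c^{\T}$ for some $c\in\R^{r}$. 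Conversely, adding any such constant row to $\Phi$ leaves $B_1^{\T}\Phi$ unchanged, because each column of $B_1$ sums to zero. Equivalently, $\ker B_1^{\T}=\ker(B_1B_1^{\T})$ is spanned by $\mathbf{1}$ in each channel.

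The main obstacle is only interpretive. With $r$ channels, the ``additive constant'' is a vector in $\R^r$, one constant per channel, and the statement should be read that way. On a disconnected $K$ there would instead be one constant per connected component, which is why the hypothesis, and the normalization stated after~\eqref{eq:solve}, are component-wise. I would remark on this briefly at the end of the proof.
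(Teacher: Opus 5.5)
Your proof is correct and follows the paper's route. The equivalence is the uniqueness of the orthogonal decomposition, which you invoke through Proposition~\ref{prop:degenerate} in one direction and Theorem~\ref{thm:hodge} in the other, and uniqueness of the potential comes from $\ker B_1^{\T}$ consisting of the constants on a connected complex; your path-propagation argument and your per-channel reading of the constant as a vector in $\R^r$ only make explicit what the paper leaves implicit.
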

\begin{proof}
This is the uniqueness of the three-term orthogonal decomposition. If two potentials yield the same field, their difference belongs to $\ker B_1^{\T}$ and is constant on a connected graph.
\end{proof}

The local condition $B_2^{\T}\Omega=0$ eliminates the coexact component but does not eliminate $H$. It therefore does not establish a global potential when the complex has unfilled cycles.

\subsection{Weighted fields and topology}

Confidence weighting can be expressed with positive-definite diagonal $W_k$ and $\langle u,v\rangle_{W_k}=u^{\T}W_kv$. The weighted adjoints and Laplacian are
\begin{equation}\label{eq:weighted}
  \delta_1^W = W_0^{-1}B_1W_1,\qquad
  \delta_2^W = W_1^{-1}B_2W_2,\qquad
  L_1^W = B_1^{\T}W_0^{-1}B_1W_1 + W_1^{-1}B_2W_2B_2^{\T}.
\end{equation}
Here $C^1=\im d_0\oplus_{W_1}\im\delta_2^W\oplus_{W_1}\ker L_1^W$. Orthogonality and energy use the $W_1$ inner product. Keep candidate face weights strictly positive during differentiation if the topology is to remain fixed. Setting a weight exactly to zero changes rank and may change $\beta_1$, while a small positive weight changes the metric but not the cohomology dimension. The choice of a face represents a substantive compositional assumption and must be ablated.

\section{HodgePhaseNet architecture and attention}

\subsection{Building the relation complex}

At a selected layer and attention head, choose a bounded candidate edge set from sequential neighbors, type-compatible knowledge and evidence links, and a limited number of pairs nominated by the \emph{base} attention scores. An edge connects two distinct vertices; the canonical orientation may be assigned by vertex index. Include a face only when all three edges exist and a stated relation composition rule admits it. This rule must be established using available input information rather than the labels of the held-out answer. The complex may be shared between heads, while $\Omega^{(\ell,h)}\in\R^{n_1\times r}$ and its decomposition are head-specific.

The relation encoder receives the Wave, Quantum, Modular, and evidence features of Section~\ref{sec:encoder}. It may be trained with relation direction and type supervision. A one-time edge selection from base attention avoids a circular instruction to select edges using attention logits that have not yet been computed. Discrete top-$k$ membership is generally nondifferentiable; training can hold membership fixed within a forward pass or use a fixed candidate graph with positive continuous weights. If there is no filled face, $C=0$ by construction; if the complex is a tree, both $C$ and $H$ are zero. These outcomes must be reported rather than interpreted as evidence that the model has detected no contradiction. Figure~\ref{fig:pipeline} locates the Hodge layer and its ordered-pair correction in the attention computation.

\begin{figure}[t]
  \centering
  \includegraphics[width=0.95\linewidth]{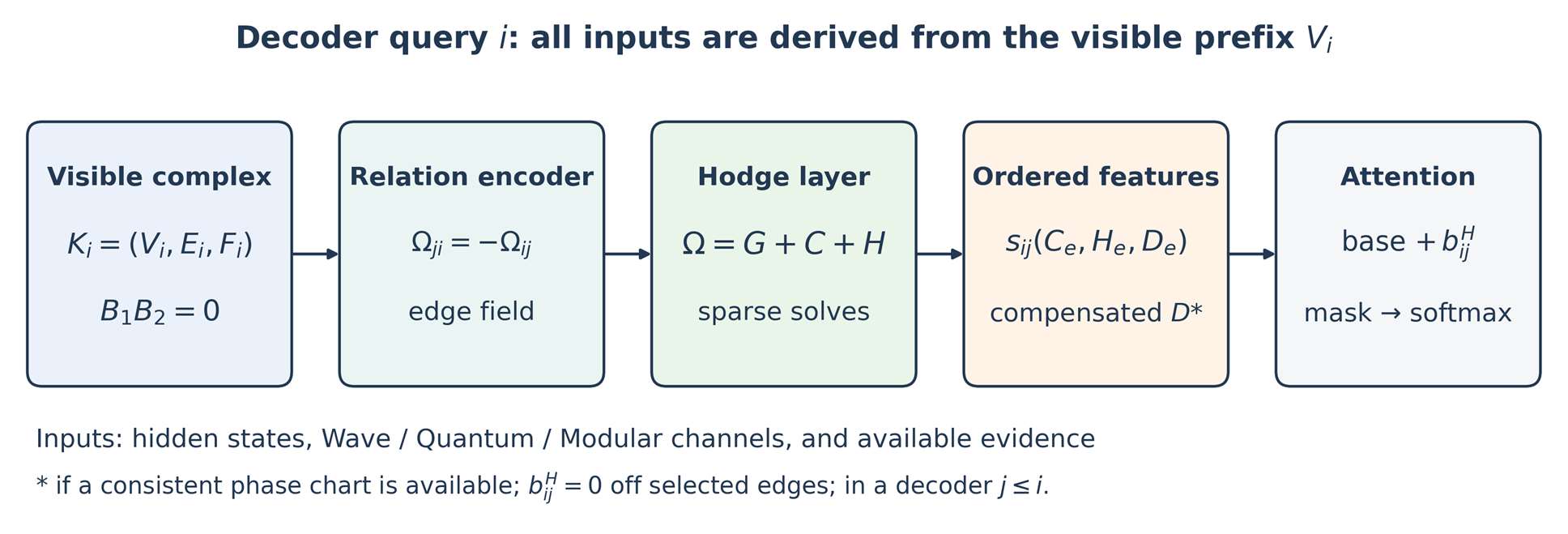}
  \caption{One-head computation path. All decoder features and the complex are constructed from the visible prefix. Edge orientation is converted for each ordered query--key pair; the compensated gradient $D$ is used only when its gauge assumptions hold. The learned edge bias precedes the causal mask and softmax.}
  \label{fig:pipeline}
\end{figure}

\subsection{Hodge layer and feature construction}

For each chosen head, compute the three fields by~\eqref{eq:solve}. A useful local feature on edge $e$ is the pair of \emph{oriented} coexact and harmonic values $(C_e,H_e)$, together with their orientation-independent norms. For a face $f$, let $\kappa_f=(B_2^{\T}\Omega)_f$; this is a signed local circulation. To associate faces with an edge, one may use
\begin{equation}\label{eq:facefeat}
  t_e = \frac{\sum_{f\colon e\subset\partial f}(B_2)_{ef}\,\kappa_f}{\max\bigl\{1,\sum_f\lvert(B_2)_{ef}\rvert\bigr\}},\qquad
  \sigma_s = \frac{\norm{s}_F^2}{\norm{C}_F^2+\norm{H}_F^2+\varepsilon},\qquad
  s\in\{C,H\},\ \varepsilon>0.
\end{equation}
The product $(B_2)_{ef}\kappa_f$ transforms like an oriented edge even if the face orientation is reversed. Norms alone cannot convey a signed semantic direction, whereas a signed edge feature without its orientation conversion depends on an arbitrary incidence convention.

The raw gradient $G=P_G\Omega$ changes under a gauge transformation of a connection. If token phases $\theta\in C^0(K;\R^r)$ are available in one consistent real chart and transform as $\theta'=\theta+\chi$, define the compensated gradient
\begin{equation}\label{eq:compensated}
  D_e = G_e - (B_1^{\T}\theta)_e.
\end{equation}
Then $D$ is gauge invariant under the convention of~\eqref{eq:gauge}. Alternatively, decompose a separately learned gauge-invariant relation field $\tilde{R}$ and use $P_G\tilde{R}$ as the directed hierarchy feature. If neither construction is available, omit the gauge-dependent $G$ from any bias advertised as gauge invariant. Taking $\norm{G_e}$ or $\rho_G$ from the transformed connection does \textbf{not} cure this problem: those quantities can change under gauge transformations. To use a global gradient energy in the invariant route, use $\rho_G$ of the compensated or separately invariant field instead.

\subsection{Ordered query--key scores}

For query $i$ and key $j$ in a permitted set $\cA_i$, let $e(i,j)$ be their selected unoriented edge when present. Set $s_{ij}=+1$ if the stored orientation of $e(i,j)$ is $i\to j$, and $s_{ij}=-1$ otherwise. The attention interface uses \textbf{ordered} pair features such as
\begin{equation}\label{eq:pairfeat}
  u_{ij} = \bigl[s_{ij}C_e;\ s_{ij}H_e;\ s_{ij}D_e;\ s_{ij}t_e;\ \norm{C_e}_2;\ \norm{H_e}_2;\ \sigma_C;\ \sigma_H;\ f^E_{ij};\ r_{ij}\bigr].
\end{equation}
For multichannel fields, signed quantities in~\eqref{eq:pairfeat} are vectors; a small learned map gives a scalar correction. Gauge-neutral Wave and finite-group invariants may also enter the pair map. A phase feature must be covariant or invariant under the stated local gauge rule. Features from the reverse order of an asymmetric relation are not silently reused as if they were symmetric.

Let $q_i=W_Qx_i$, $k_j=W_Kx_j$, and $v_j=W_Vx_j$. Define a finite correction and apply the usual mask \emph{after} it:
\begin{equation}\label{eq:attention}
\begin{aligned}
  b^H_{ij} &= \begin{cases}\gamma_h\tanh\bigl(f_{\eta,h}(u_{ij})\bigr), & e(i,j)\in E,\\ 0, & e(i,j)\notin E,\end{cases}\\
  z^H_{ij} &= \frac{q_i^{\T}k_j}{\sqrt{d_k}} + b^H_{ij} + m_{ij},\qquad
  m_{ij} = \begin{cases}0, & j\in\cA_i,\\ -\infty, & j\notin\cA_i,\end{cases}\\
  \alpha_{ij} &= \frac{\exp z^H_{ij}}{\sum_{t\in\cA_i}\exp z^H_{it}},\qquad
  y_i = \sum_{j\in\cA_i}\alpha_{ij}v_j.
\end{aligned}
\end{equation}
Each $\cA_i$ contains at least one key, usually $i$ itself. Self pairs have $b^H_{ii}=0$. A bounded learnable scale $\gamma_h$ and normalized features can limit large corrections. Setting $\gamma_h=0$ exactly recovers the base attention layer. The Hodge edge set supplies \emph{biases} on selected pairs; it need not replace the Transformer's dense allowed attention pattern. If one also sparsifies the attention mask, that is a separate intervention and must be compared separately. In particular, $b^H_{ij}$ is learned from task evidence: neither positive curl nor harmonic magnitude has a universal negative sign.

\begin{proposition}[orientation-independent logits]\label{prop:orientation}
If an edge orientation is flipped and its incidence matrices and edge field are transformed consistently, the bias in~\eqref{eq:attention} constructed from~\eqref{eq:pairfeat} is unchanged for every ordered pair.
\end{proposition}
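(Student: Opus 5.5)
The argument models the flip of edge $e_0$ by the diagonal sign matrix $S=\mathrm{diag}(\epsilon_e)$ with $\epsilon_{e_0}=-1$ and $\epsilon_e=+1$ otherwise, so $S=S^{\T}=S^{-1}$. A consistent transformation replaces $B_1$ by $B_1'=B_1S$, $B_2$ by $B_2'=SB_2$ and $\Omega$ by $\Omega'=S\Omega$. The chain condition~\eqref{eq:chain} still holds, since $B_1SSB_2=B_1B_2=0$. The same $S$ handles several simultaneous flips, so one edge suffices. I would show that every signed edge quantity in~\eqref{eq:pairfeat} picks up the factor $\epsilon_e$, every unsigned quantity is unchanged, and the factor cancels against $s_{ij}$.

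\textbf{Step 1 (components).} From~\eqref{eq:proj} I would check $P_G'=SP_GS$. We have $B_1'B_1'^{\T}=B_1B_1^{\T}$, so $P_G'=SB_1^{\T}(B_1B_1^{\T})^{\dagger}B_1S$. Similarly $B_2'^{\T}B_2'=B_2^{\T}B_2$ gives $P_C'=SP_CS$, and then $P_H'=SP_HS$. Hence $G'=SG$, $C'=SC$ and $H'=SH$. Alternatively, $S$ is an orthogonal map carrying $\im B_1^{\T}$, $\im B_2$ and $\ker L_1$ onto their primed versions, so the uniqueness in Theorem~\ref{thm:hodge} gives the same conclusion.

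\textbf{Step 2 (remaining features).} The vertex potentials $\theta$ and the vertex states are unaffected, so $(B_1'^{\T}\theta)=S B_1^{\T}\theta$ and therefore $D'=SD$ by~\eqref{eq:compensated}. For faces, $\kappa'=B_2'^{\T}\Omega'=B_2^{\T}SS\Omega=\kappa$. Consequently $(B_2')_{ef}\kappa_f=\epsilon_e(B_2)_{ef}\kappa_f$. The denominator in~\eqref{eq:facefeat} involves only $\lvert(B_2)_{ef}\rvert$ and is invariant, so $t'=St$. The quantities $\norm{C_e}_2$, $\norm{H_e}_2$, $\sigma_C$ and $\sigma_H$ are built from norms, so the factors $\epsilon_e^2=1$ leave them fixed.

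\textbf{Step 3 (the ordered-pair conversion).} The stored orientation of $e(i,j)$ is reversed, so $s'_{ij}=\epsilon_e s_{ij}$. It follows that $s'_{ij}C'_e=\epsilon_e^2 s_{ij}C_e=s_{ij}C_e$, and the same holds for $H$, $D$ and $t$. The inputs $f^E_{ij}$ and $r_{ij}$ are ordered-pair metadata attached to $(i,j)$, not to the stored orientation, so they are unchanged by hypothesis. Thus $u'_{ij}=u_{ij}$. Since $b^H_{ij}$ in~\eqref{eq:attention} is a fixed function of $u_{ij}$ and of edge membership, and membership does not depend on orientation, the bias is unchanged. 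The self pairs and the non-edge pairs are zero in both cases.

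\textbf{Main obstacle.} The main obstacle is not the algebra but pinning down what "transformed consistently" means. The edge field must actually transform as $\Omega'=S\Omega$. The encoder construction~\eqref{eq:encoder} guarantees this, because swapping $i$ and $j$ negates $\Omega_{i\to j}$; I would state this explicitly as the reason the hypothesis holds for the proposed encoder. I would also note that face-orientation flips, given by $B_2\mapsto B_2T$ with $T$ a diagonal sign matrix on faces, leave $P_C$ and $t$ unchanged, since $\kappa\mapsto T\kappa$ and the products $(B_2)_{ef}\kappa_f$ are preserved. This covers that case as well. A last caveat concerns minimum-norm or approximate solves in~\eqref{eq:solve}: the pseudoinverse identity in Step 1 shows that exact solves commute with $S$, and CG-type solvers on the sign-conjugated system also commute with $S$.
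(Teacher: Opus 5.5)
Your proposal is correct and is essentially the paper's proof. The paper likewise writes the flips as diagonal sign matrices and uses~\eqref{eq:proj} to get $G'=S_1G$, $C'=S_1C$, $H'=S_1H$; it notes that $t$ and $D$ pick up the same sign, cancels that sign against the flipped $s_{ij}$, and keeps the norms fixed, differing from you only in treating edge and face flips in one step via $B_2'=S_1B_2S_2$, which your main argument combined with your face-flip remark recovers.
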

\begin{proof}
Write edge and face sign changes as diagonal matrices $S_1,S_2$, with $B_1'=B_1S_1$, $B_2'=S_1B_2S_2$, and $\Omega'=S_1\Omega$. Equation~\eqref{eq:proj} gives $G'=S_1G$, $C'=S_1C$, $H'=S_1H$. The oriented face aggregate and compensated feature also change by $S_1$, while the sign $s_{ij}$ flips on a reversed edge. Each signed entry in~\eqref{eq:pairfeat}, every norm, and the resultant $b^H_{ij}$ are unchanged.
\end{proof}

\begin{proposition}[conditional gauge invariance of attention]\label{prop:gauge}
Suppose $\Omega'=\Omega+B_1^{\T}\chi$, $\theta'=\theta+\chi$, and all other inputs to $f_{\eta,h}$ and the base attention logit are gauge invariant. Then~\eqref{eq:pairfeat}, \eqref{eq:attention}, and $y_i$ are gauge invariant.
\end{proposition}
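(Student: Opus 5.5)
The plan is to push the gauge shift through each stage of the pipeline. The central observation is that the perturbation $B_1^{\T}\chi$ lies in $\im B_1^{\T}$, so the Hodge layer sends all of it into the gradient field. Applying the projectors~\eqref{eq:proj} to $\Omega'=\Omega+B_1^{\T}\chi$, linearity gives
\begin{equation*}
G'=G+P_GB_1^{\T}\chi,\qquad C'=C+P_CB_1^{\T}\chi,\qquad H'=H+P_HB_1^{\T}\chi.
\end{equation*}
Since $P_G$ is the orthogonal projector onto $\im B_1^{\T}$, we have $P_GB_1^{\T}\chi=B_1^{\T}\chi$. By Proposition~\ref{prop:degenerate} (equivalently the orthogonality in Theorem~\ref{thm:hodge}), $P_CB_1^{\T}\chi=P_HB_1^{\T}\chi=0$. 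Hence
\begin{equation*}
G'=G+B_1^{\T}\chi,\qquad C'=C,\qquad H'=H.
\end{equation*}

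Next I would verify each entry of~\eqref{eq:pairfeat} in turn.

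\textbf{Coexact and harmonic terms.} Because $C'=C$ and $H'=H$, the entries $s_{ij}C_e$, $s_{ij}H_e$, $\norm{C_e}_2$, $\norm{H_e}_2$, $\sigma_C$ and $\sigma_H$ are unchanged. The signs $s_{ij}$ depend only on the stored orientation, which is fixed here.

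\textbf{Face aggregate.} The face circulation satisfies $\kappa'=B_2^{\T}\Omega'=B_2^{\T}\Omega+B_2^{\T}B_1^{\T}\chi=\kappa$ by the chain condition~\eqref{eq:chain}. Therefore $t_e$ in~\eqref{eq:facefeat} is invariant.

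\textbf{Compensated gradient.} From~\eqref{eq:compensated},
\begin{equation*}
D'_e=G_e+(B_1^{\T}\chi)_e-(B_1^{\T}\theta)_e-(B_1^{\T}\chi)_e=D_e,
\end{equation*}
so $s_{ij}D_e$ is invariant.

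\textbf{Remaining entries.} $f^E_{ij}$ and $r_{ij}$ are invariant by hypothesis, as are any other inputs to $f_{\eta,h}$.

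Once every entry is settled, $u_{ij}$ is invariant, and hence so is $b^H_{ij}=\gamma_h\tanh(f_{\eta,h}(u_{ij}))$. The mask $m_{ij}$ depends only on $\cA_i$, and the base logit $q_i^{\T}k_j/\sqrt{d_k}$ is invariant by hypothesis. It follows that $z^H_{ij}$ and $\alpha_{ij}$ are invariant. For $y_i$ I need the values $v_j=W_Vx_j$ to be invariant. I would read this off the hypothesis that the vertex states $x_j$ entering the base attention are gauge invariant. If that reading is too generous, the hypothesis must be stated explicitly.

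I expect the main obstacle to be making the projector step rigorous when the decomposition is computed through the solves~\eqref{eq:solve} rather than through~\eqref{eq:proj}. The potentials $\Phi,\Psi$ are not unique, so they may change under the gauge transformation. The uniqueness of the fields in Theorem~\ref{thm:hodge} is what removes this ambiguity: I would argue at the level of $G,C,H$, never at the level of the potentials. I would also remark that the exactness matters here, since approximate solves only give invariance up to the solver residual.
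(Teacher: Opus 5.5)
Your proposal is correct and follows essentially the same route as the paper: orthogonality sends the entire shift $B_1^{\T}\chi$ into $G$, so $C$, $H$, $\kappa$, $\sigma_C$ and $\sigma_H$ are unchanged and $D$ is invariant, after which softmax and weighted summation preserve equality. Your remark that invariance of $y_i$ also needs $v_j=W_Vx_j$ to be invariant is a fair sharpening of the paper's final step, which uses this without saying so.
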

\begin{proof}
Orthogonality gives $P_C\Omega'=C$, $P_H\Omega'=H$, and $P_G\Omega'=G+B_1^{\T}\chi$. Equation~\eqref{eq:compensated} therefore gives $D'=D$; $B_2^{\T}\Omega'=B_2^{\T}\Omega$ by~\eqref{eq:chain}. The fractions $\sigma_C,\sigma_H$ are unchanged because their component fields are unchanged. The remaining inputs were assumed invariant. Softmax and weighted summation preserve equality.
\end{proof}

The proposition has real assumptions. A logit correction that uses raw $G$ or $\norm{G}$ fails its gauge claim in general. The same applies if the base query, key, value, or evidence branch changes under a phase gauge without compensation. Finite cyclic exponents and a wrapped real connection also require separate handling across branch cuts.

\subsection{Causality in an autoregressive Transformer}

A standard causal mask alone is insufficient to prevent leakage if a Hodge field for an early query was computed using later tokens. For a decoder query at position $i$, define the visible prefix $V_i=\{1,\dots,i\}$ and construct $K_i=(V_i,E_i,F_i)$ exclusively from states and evidence available by that position. Every edge endpoint, face, Wave feature, phase feature, modular feature, and relation label used to compute $b^H_{ij}$ must be measurable from that prefix. Then
\begin{equation}\label{eq:prefix}
  E_i\subseteq\binom{V_i}{2},\qquad
  F_i\subseteq\Bigl\{(a,b,c)\in\binom{V_i}{3} : \partial(a,b,c)\subseteq E_i\Bigr\},\qquad j\le i.
\end{equation}
During teacher-forced training, the Hodge features for position $i$ must be those of $K_i$, not projections on the full sequence subsequently hidden by the attention mask. At inference, a prefix solve or a validated incremental update may be reused as the visible graph grows. This requirement raises the cost of a literal per-token implementation; batching, selected layers, coarser claim-level graphs, and approximate incremental updates should be reported with their actual approximation error. A bidirectional encoder may use its full observed input; cross-attention may use a fully observed source, while generated target tokens remain prefix-restricted.

\subsection{Learning and numerical differentiation}

For an exact Hodge layer on a fixed complex, the decomposition's reconstruction and orthogonality losses vanish up to solver error; they should be diagnostics, not claimed learning signals. One possible training objective is
\begin{equation}\label{eq:loss}
  \cL = \cL_{\mathrm{task}} + \lambda_r\cL_{\mathrm{relation}} + \lambda_q\cL_{\mathrm{phase}} + \lambda_m\cL_{\mathrm{holonomy}} + \lambda_c\cL_{\mathrm{calibration}} + \lambda_b\cL_{\mathrm{budget}}.
\end{equation}
Relation supervision identifies direction or type; the phase and finite-group objectives apply where labeled or self-supervised cycles are available; the budget term controls edges, faces, FLOPs, and memory. A blanket penalty on $\norm{C}$ or $\norm{H}$ would erase legitimate cycles and is not imposed. Differentiation through a converged sparse solve may use implicit differentiation on gauge-fixed systems; a fixed number of unrolled iterations gives a differentiable approximation with tolerance-dependent error. When a discrete top-$k$ support is used, derivatives do not pass through its membership decision.

A fast amortized version may learn $\hat\Phi,\hat\Psi,\hat H$ and set $\hat G=B_1^{\T}\hat\Phi$, $\hat C=B_2\hat\Psi$. Distill against exact fields and measure
\begin{equation}\label{eq:distill}
\begin{aligned}
  \cL_{\mathrm{dist}} &= \sum_{s\in\{G,C,H\}}\norm{\hat s - P_s\Omega}_F^2,\\
  \cL_{\mathrm{rec}} &= \norm{\Omega-\hat G-\hat C-\hat H}_F^2,\\
  \cL_{\mathrm{harm}} &= \norm{B_1\hat H}_F^2 + \norm{B_2^{\T}\hat H}_F^2.
\end{aligned}
\end{equation}
Parameterizing $\hat G$ and $\hat C$ through the incidence maps is important: merely requiring $B_2^{\T}\hat G=0$ does not rule out a harmonic contribution, and merely requiring $B_1\hat C=0$ does not force it to be coexact. The approximate variant has no exact decomposition guarantee until its measured errors justify one.

\subsection{Calibrated error risk and complexity}

For a defined future label of unsupported or incorrect output, aggregate visible-prefix features such as $z_t=\bigl(\rho_C,\ \rho_H,\ \norm{B_2^{\T}\Omega}_F^2,\ \Hol_q,\ \text{evidence support},\ \text{semantic uncertainty}\bigr)$. A learned score $s_t=f_\beta(z_t)$ is mapped to an estimated probability $\hat p_t=\Calibrate(s_t)$ using a separate calibration set. Temperature scaling or isotonic regression can be assessed with Brier score and negative log-likelihood. If conformal risk control is used, its coverage statement must be specified separately; it does not turn a raw score into a calibrated probability. The score is an empirical prediction for its labeled endpoint, not a truth oracle.

For one complex with $r$ channels and $T$ sparse iterations, matrix multiplication costs roughly
\begin{equation}\label{eq:cost}
  O\bigl(Tr\,[\nnz(B_1)+\nnz(B_2)]\bigr)
\end{equation}
plus relation encoding, complex construction, and ordinary attention. A triangular complex has $\nnz(B_2)=3n_2$. Selecting $O(kn_0)$ edges can keep the \emph{Hodge solve} sparse but leaves standard dense attention quadratic unless that attention is also altered. A separate exact solve for all $n_0$ causal prefixes multiplies the effective work; \eqref{eq:cost} applies to a single complex, not automatically to an entire generated sequence. Report wall time and peak memory, not just asymptotic sparse multiplication.

\section{Further mathematical properties}

\subsection{Gauge behavior, finite holonomy, and stability}

For a fixed real phase chart and a connection field obeying~\eqref{eq:gauge}, the proof of Proposition~\ref{prop:gauge} shows
\begin{equation}\label{eq:gaugeproj}
  P_C(\Omega+B_1^{\T}\chi)=P_C\Omega,\qquad
  P_H(\Omega+B_1^{\T}\chi)=P_H\Omega,\qquad
  P_G(\Omega+B_1^{\T}\chi)=G+B_1^{\T}\chi.
\end{equation}
These are statements about the decomposition, not an automatic guarantee that every learned semantic feature is gauge invariant. Finite group holonomy remains in its exact modular channel. A principal real representative such as
\begin{equation}\label{eq:wrap}
  \tilde a_e = \wrap_{(-1/2,\,1/2]}\!\left(\frac{a_e}{q}\right)
\end{equation}
can enter a real relation channel, but crossing a wrap boundary creates a discontinuity and changes which real lift is used. There is no canonical positive-definite real inner product on $\Z/q\Z$ that makes its group cohomology identical to the real orthogonal splitting of~\eqref{eq:hodge}. Evaluate modular holonomy and real component energies side by side.

For a fixed unweighted complex, each orthogonal projector is nonexpansive:
\begin{equation}\label{eq:nonexp}
  \norm{P_s(\Omega+\Delta)-P_s\Omega}_F = \norm{P_s\Delta}_F \le \norm{\Delta}_F,\qquad s\in\{G,C,H\}.
\end{equation}
This does not imply stability when $E$ or $F$ changes, a solver is truncated, or the nonlinear model is trained. It also does not establish global Lipschitz continuity of the complete Transformer.

\subsection{Attention output bound}

\begin{theorem}[convex combination bound]\label{thm:bound}
For any row of~\eqref{eq:attention} with finite allowed logits and $\norm{v_j}_2\le B_V$ for all permitted keys, $\norm{y_i}_2\le B_V$.
\end{theorem}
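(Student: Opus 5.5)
The plan is to show that $\alpha_{i\cdot}$ is a probability vector on $\cA_i$ and then apply the triangle inequality. First, the finiteness hypothesis is used to dispose of the mask and the bias. For $j\in\cA_i$ we have $m_{ij}=0$. The term $b^H_{ij}$ is either $0$ or $\gamma_h\tanh(\cdot)$, and since $\gamma_h$ is a finite scale this is bounded by $\lvert\gamma_h\rvert$. So $z^H_{ij}$ is a finite real number for every permitted key, and $\exp z^H_{ij}\in(0,\infty)$. The paper stipulates that $\cA_i$ is nonempty, so the denominator $\sum_{t\in\cA_i}\exp z^H_{it}$ is a finite positive number. Hence each $\alpha_{ij}$ with $j\in\cA_i$ is well defined, satisfies $\alpha_{ij}>0$, and $\sum_{j\in\cA_i}\alpha_{ij}=1$. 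Keys outside $\cA_i$ carry logit $-\infty$, so they get weight zero and do not appear in the sum defining $y_i$ at all.

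Second, apply the triangle inequality and homogeneity of $\norm{\cdot}_2$ to $y_i=\sum_{j\in\cA_i}\alpha_{ij}v_j$:
\[
\norm{y_i}_2\le\sum_{j\in\cA_i}\alpha_{ij}\norm{v_j}_2\le B_V\sum_{j\in\cA_i}\alpha_{ij}=B_V .
\]
Equivalently, $y_i$ lies in the convex hull of $\{v_j\}_{j\in\cA_i}$, which is contained in the closed ball of radius $B_V$ because that ball is convex.

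I do not expect a real obstacle. The only step that needs care is the first: being explicit that finiteness of the allowed logits, together with $\cA_i\neq\emptyset$, rules out a zero or infinite denominator and any $-\infty+\infty$ ambiguity. I will also remark that the bound holds for any values of $\gamma_h$, $f_{\eta,h}$, and the Hodge features $u_{ij}$. The correction $b^H$ only reweights the convex combination, so the bound holds independently of the Hodge layer and of whether the gauge assumptions of Proposition~\ref{prop:gauge} hold.
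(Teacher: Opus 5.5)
Your proposal is correct and follows the paper's proof: the softmax weights form a probability vector on $\cA_i$, so $y_i$ is a convex combination of the permitted value vectors, and the triangle inequality gives $\norm{y_i}_2\le B_V$. Your added care about finite logits and $\cA_i\neq\emptyset$ just spells out what the paper's one-line argument assumes.
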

\begin{proof}
The softmax weights are nonnegative and sum to one, so $y_i$ is a convex combination of the permitted value vectors. Apply the triangle inequality.
\end{proof}

This elementary bound is not a claim of improved training stability or semantic validity.

\section{Verified five-vertex example}

\subsection{A filled local cycle and an unfilled global cycle}

Take $V=\{1,2,3,4,5\}$ and oriented edges $e_1\colon1\to2$, $e_2\colon2\to3$, $e_3\colon3\to1$, $e_4\colon1\to4$, $e_5\colon4\to5$, $e_6\colon5\to1$. Fill the first triangle with a face and leave the second unfilled. Then
\begin{equation}\label{eq:example}
  B_1 = \begin{bmatrix}
    -1 & 0 & 1 & -1 & 0 & 1\\
    1 & -1 & 0 & 0 & 0 & 0\\
    0 & 1 & -1 & 0 & 0 & 0\\
    0 & 0 & 0 & 1 & -1 & 0\\
    0 & 0 & 0 & 0 & 1 & -1
  \end{bmatrix},\qquad
  B_2 = \begin{bmatrix}1\\1\\1\\0\\0\\0\end{bmatrix}.
\end{equation}
Direct multiplication yields $B_1B_2=0$. The graph is connected, $\rank B_1=4$, and $\rank B_2=1$, so $\beta_1=6-4-1=1$. Figure~\ref{fig:example} shows the orientations and the single filled face of this complex.

\begin{figure}[t]
  \centering
  \includegraphics[width=0.85\linewidth]{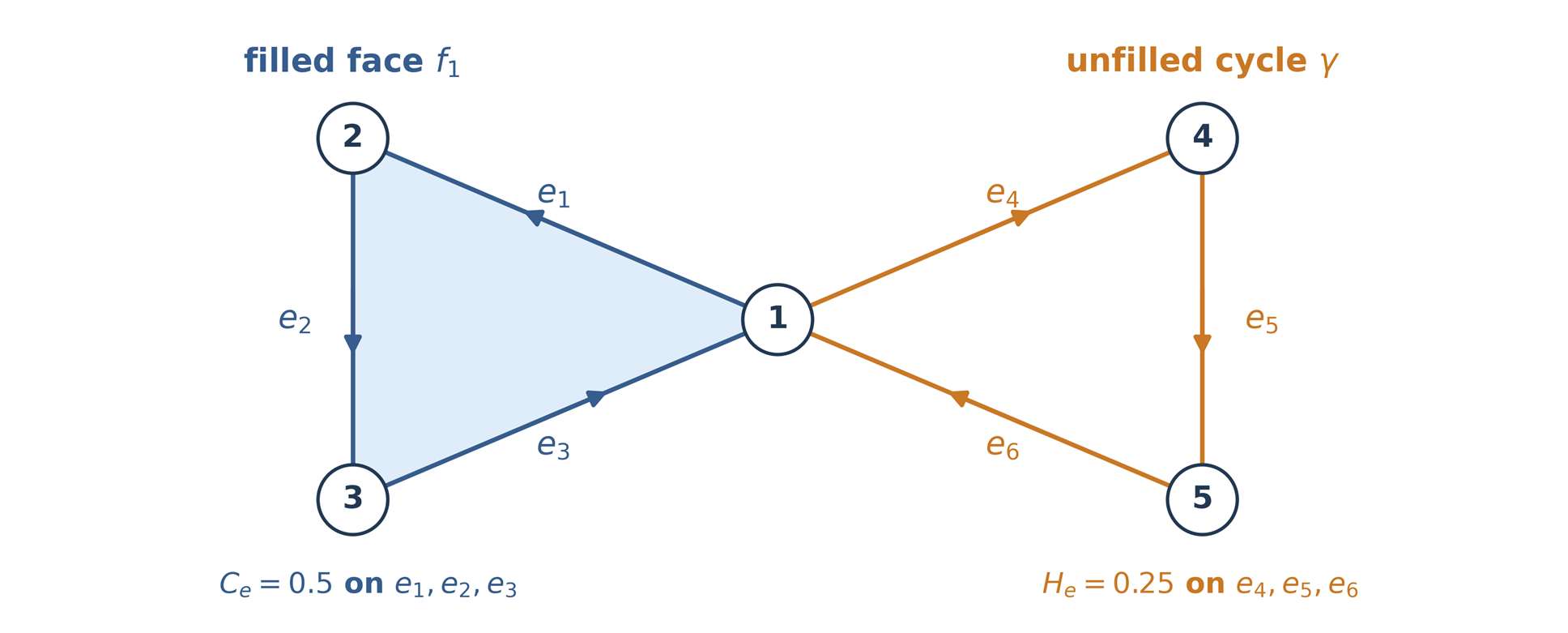}
  \caption{Five-vertex example of Section~7.1. The blue triangle $(1,2,3)$ is a filled face and carries the coexact field $C$; the orange triangle $(1,4,5)$ has no face and supports a harmonic cycle $H$. Arrows follow the edge orientations in~\eqref{eq:example}; the annotated values are component fields, not the full $\Omega$.}
  \label{fig:example}
\end{figure}

Choose
\begin{equation}
  \Phi=(0,1,2,-1,1)^{\T},\qquad \Psi=(0.5),\qquad H=(0,0,0,0.25,0.25,0.25)^{\T}.
\end{equation}
Then
\begin{equation}
\begin{aligned}
  G &= B_1^{\T}\Phi = (1,1,-2,-1,2,-1)^{\T},\\
  C &= B_2\Psi = (0.5,0.5,0.5,0,0,0)^{\T},\\
  \Omega &= G+C+H = (1.5,1.5,-1.5,-0.75,2.25,-0.75)^{\T}.
\end{aligned}
\end{equation}
The harmonic vector satisfies $B_1H=B_2^{\T}H=0$. Pairwise inner products of $G,C,H$ vanish, and their squared norms are $12$, $0.75$, and $0.1875$ respectively. Thus $\norm{\Omega}_2^2=12.9375$. Direct numerical projection with pseudoinverses reproduces each field to floating-point precision. The filled first triangle supports local coexact circulation; the second, unfilled triangle supports the remaining harmonic cycle. This example establishes an algebraic distinction, not a semantic verdict about either cycle.

\section{Implementation protocol}

\subsection{One selected layer and head}

\begin{enumerate}[leftmargin=*]
  \item Obtain visible hidden states, relation metadata, available evidence, and the Wave, Quantum, and Modular feature channels. In a decoder, first restrict every channel to the query's visible prefix.
  \item Select vertices and a bounded candidate edge set using a fixed rule or the base attention logits. Assign canonical edge orientations. Add a face only when its three edges exist and a specified composition rule allows it. Check $B_1B_2=0$.
  \item Compute both ordered relation-encoder outputs for each edge and form the antisymmetric $\Omega$ by~\eqref{eq:encoder}. If a gauge claim is made, enforce the connection-plus-invariant-residual parameterization.
  \item Solve the two sparse systems in~\eqref{eq:solve}, recover $G,C,H$, and record reconstruction, orthogonality, divergence, and face-circulation residuals. Do not compute a full harmonic eigenbasis for routine inference.
  \item For each permitted ordered query--key pair, multiply the signed edge values by $s_{ij}$, form~\eqref{eq:pairfeat}, and calculate a finite scalar bias. Use zero bias on pairs outside the relation complex and on self pairs; apply the original attention mask and softmax by~\eqref{eq:attention}.
  \item Compute the task objective, optional relation and phase objectives, and resource penalties. Train a separate risk calibrator on held-out calibration data after choosing the base model and score.
\end{enumerate}

Three deployment variants should be compared: a converged sparse solve for an accuracy reference; a specified fixed-iteration solver with reported residuals; and a distilled approximate head with the losses in~\eqref{eq:distill}. Approximation speed is an empirical measurement, not implied by the mathematical decomposition.

\subsection{Runtime invariance checks}

Randomly flip edge and face orientations and verify that (i) the three fields transform by the same edge sign matrix; (ii) their energies are unchanged; and (iii) the ordered logits and outputs of~\eqref{eq:attention} are unchanged. For gauge tests, add $B_1^{\T}\chi$ to an appropriate real connection, transform $\theta$ in the same chart, and compare $C$, $H$, $D$, bias, and output. Check an explicit future-token intervention: editing tokens after position $i$ must leave all logits and outputs for queries through $i$ unchanged in a causal decoder. Finally verify that setting $\gamma_h=0$ recovers the original masked attention exactly up to numerical precision.

\section{Empirical evaluation plan}

\subsection{Questions and datasets}

The evaluation separates mathematical recovery from semantic hypotheses. In a synthetic complex with known $G,C,H$, measure relative error of each field, the number of independent harmonic cycles, projector orthogonality, and sensitivity to missing edges, faces, noise, and solver tolerance. A reported synthetic recovery is a test of the implementation, not evidence of natural-language validity.

For hierarchy and direction, use lexical hierarchy resources such as WordNet and graded lexical entailment data such as HyperLex~\cite{miller1995,vulic2017}, with relation direction accuracy and rank correlation. For local consistency, construct labeled three-claim compositions and compare with fact verification and NLI sources such as FEVER and ANLI~\cite{thorne2018,nie2020}. Neither dataset alone supplies a canonical simplicial complex or a labeled cycle; the construction protocol and labels for valid versus contradictory cycles must be released. For global discourse, include long claim chains and document-order perturbations that are locally plausible but globally inconsistent, alongside legitimate topic returns. For generated factual claims, evaluate source-supported atomic facts and selective answering on TruthfulQA and FActScore-style settings~\cite{lin2022,min2023}. Separately report Brier score, negative log-likelihood, expected calibration error with its binning rule, and risk--coverage curves~\cite{guo2017}. Compare against semantic uncertainty and retrieval-based controls~\cite{farquhar2024,lewis2020}.

\begin{center}
\small
\begin{tabular}{@{}lll@{}}
\toprule
Mechanism & Essential outcome & Critical control\\
\midrule
Algebraic recovery & Component error and orthogonality & Known fields and changed topology\\
Hierarchy & Direction and graded entailment & Equal-size relation encoder\\
Local cycles & Valid cycle versus contradiction AUPRC & Raw face sum and modular holonomy\\
Global cycles & Legitimate return versus inconsistency & Local consistency and graph size\\
Factuality & Supported atomic facts and selective risk & Same evidence and inference budget\\
Calibration & Brier score and risk--coverage & Separate calibration split\\
\bottomrule
\end{tabular}
\end{center}

All English and Japanese analyses require language-appropriate resources and separate calibration; an English threshold should not be assumed transferable. Hold out documents, entities, or claim groups across training, calibration, and test whenever related cycles share underlying evidence. Distinguish a missing retrieved source from an actually false claim.

\subsection{Matched baselines and ablations}

Match tokenizer, backbone, training tokens, evidence availability, maximum candidate edges, and inference budget. Compare: an unmodified Transformer; each available Wave, Quantum, and Modular feature branch alone~\cite{wavephasenet,quantumphasenet,modularphasenet}; a parameter-matched pair-relation MLP without Hodge projection; vertex graph-Laplacian smoothing; a representative simplicial message-passing model~\cite{ebli2020,bodnar2021}; and the exact, fixed-iteration, and distilled Hodge versions. A stronger evidence-retrieval or verifier baseline is needed before assigning a factuality gain to topology.

Required ablations replace $\Omega$ with $B_1^{\T}X$; remove all faces; fill all admissible triangles; use only one of $G,C,H$ at a time; remove each PhaseNet feature family, evidence features, or the risk calibrator; change the edge and face budget; permute relation labels; randomly reverse edge orientations; transform the phase gauge in a controlled chart; and compare prefix-specific causal projections with a deliberately full-sequence control to expose leakage. Changing face coverage can shift energy between $C$ and $H$ without changing the text, so semantic claims must be robust to reasonable complex-construction choices.

Use several independent seeds and report means, variability, confidence intervals on paired examples or documents, and the multiplicity correction for predeclared primary comparisons. For rare contradictions, AUPRC is essential alongside AUROC. Report added trainable parameters, floating-point operations, sparse solve iterations, peak memory, and measured generation latency; keep model selection, calibration, and final assessment separate.

\subsection{Criteria that would narrow the proposal}

The semantic interpretation should be reduced or rejected if $C,H$ cannot discriminate valid from problematic cycles beyond matched relation and triangle-sum controls; if face construction changes reverse the findings under plausible rules; if an equal-cost relation MLP gives the same gain; if orientation or gauge tests fail their stated assumptions; or if factuality gains disappear after matching evidence and compute. A risk score that does not improve Brier score or selective risk over entropy and evidence controls cannot support a calibration claim. Failure to beat a cheaper verifier on quality per latency would constrain the practical case for a Hodge layer even if the decomposition is mathematically correct.

\section{Discussion and limitations}

An exact component is a potential difference and can encode a consistent direction after a learned orientation convention, but a perfectly consistent false account can also be exact. A coexact component records circulation explained by the selected local faces; it can mark a violated transitive relation or a legitimate reciprocal definition. A harmonic component records divergence-free flow with zero circulation on declared faces; it can reflect an unsupported reasoning loop or an intentional long-range return. Meaning attaches to typed edges, face rules, evidence, and supervised outcomes, not to the names of the three subspaces.

The decomposition depends on vertex granularity, edge selection, and face filling. A dynamic threshold can change topology discontinuously. A wrapped phase lift is chart-dependent. Gauge invariance of $C,H$ does not make a raw gradient-based attention branch invariant. Exact causal decoding may require many solves; incremental updates and learned approximations introduce measurable errors. The mathematical uniqueness result applies to fields on a fixed complex, not to the choice of complex or to an induced interpretation of language.

No natural-language improvement is reported here. Results previously reported for QuantumPhaseNet's built-in synthetic validation~\cite{quantumphasenet} are not experiments on HodgePhaseNet and are not reused as evidence for it. In high-consequence domains, a calibrated risk model may help prioritize verification or abstention, but the output should retain source review and human oversight. Potential resource and cultural bias in knowledge links, relation annotations, and retrieval sources should be measured across languages and viewpoints.

\section{Conclusion}

HodgePhaseNet gives a specific mathematical place for the PhaseNet sequence of ideas to meet: spectral features and covariant or finite phase relations inform an antisymmetric edge field, whose exact, coexact, and harmonic projections separate three kinds of organization. The connection from this field back to Transformer attention requires ordered-pair signs, a visible prefix for causal decoding, a gauge-aware choice of gradient feature, and explicit solver and topology checks. The algebra and the five-vertex example are exact; proposed semantic benefits and risk calibration remain testable hypotheses. The next empirical step is an audited sparse reference implementation with synthetic recovery, matched natural-language baselines, and independently labeled valid and invalid cycles.

\appendix

\section{Projection derivation}

The gradient projection solves
\begin{equation}
  \min_{\Phi}\ \tfrac12\norm{\Omega-B_1^{\T}\Phi}_F^2 \quad\Longrightarrow\quad B_1B_1^{\T}\Phi = B_1\Omega. \tag{A1}
\end{equation}
Choosing the minimum-norm solution gives $\Phi=(B_1B_1^{\T})^{\dagger}B_1\Omega$ and the first projector in~\eqref{eq:proj}. Likewise,
\begin{equation}
  \min_{\Psi}\ \tfrac12\norm{\Omega-B_2\Psi}_F^2 \quad\Longrightarrow\quad B_2^{\T}B_2\Psi = B_2^{\T}\Omega. \tag{A2}
\end{equation}
The spaces are orthogonal by $B_1B_2=0$, so the two least-squares problems can be solved independently. The residual $(I-P_G-P_C)\Omega$ is annihilated by both $B_1$ and $B_2^{\T}$. An orientation change $S_1,S_2$ conjugates the edge projectors, $P_s'=S_1P_sS_1$, which supplies the algebra used in Proposition~\ref{prop:orientation}.

\section{Notation and reproducibility checks}

\begin{center}
\begin{tabular}{@{}ll@{}}
\toprule
Symbol & Meaning\\
\midrule
$K=(V,E,F)$ & Finite oriented semantic complex\\
$B_1,\ B_2$ & Vertex--edge and edge--face incidence matrices\\
$\Phi,\ \Omega,\ \Psi$ & Vertex potential, edge field, face potential\\
$G,\ C,\ H$ & Exact, coexact, and harmonic edge fields\\
$P_G,\ P_C,\ P_H$ & Orthogonal projectors for a fixed unweighted complex\\
$\beta_1$ & First Betti number\\
$\rho_s$ & Squared energy fraction of component $s$\\
$a_e,\ \Hol_q$ & Finite cyclic connection and cycle holonomy\\
$s_{ij},\ D_e$ & Ordered-pair orientation sign and compensated gradient\\
$\cA_i$ & Keys visible to query $i$\\
\bottomrule
\end{tabular}
\end{center}

An implementation should record the construction of $E$, $F$, $B_1B_2$, solver residuals, reconstruction and cross-component inner products, $B_1H$, $B_2^{\T}H$, orientation and gauge tests, and the base attention recovery at zero Hodge correction. The scalar five-vertex test in Section~7 provides exact reference values. For causal generation, a suffix intervention test is necessary in addition to checking the mask matrix.

\end{document}